%% file: main.tex
\documentclass[conference]{IEEEtran}
\IEEEoverridecommandlockouts    
\usepackage[letterpaper]{geometry}
\usepackage{amsthm}
\usepackage{multirow}
\usepackage{lipsum}
\usepackage{graphicx}
\usepackage{epstopdf}
\usepackage{amsmath,amsfonts,amssymb}
\usepackage{algorithmic}
\usepackage[ruled,vlined]{algorithm2e}
\usepackage{marvosym}
\usepackage{mathrsfs}
\usepackage{cite}
\usepackage{array}
\usepackage{soul}
\usepackage{color}
\usepackage{booktabs}
\usepackage{graphics} 
\usepackage{epsfig} 
\usepackage{mathptmx} 
\usepackage{times} 
\usepackage{amsmath} 
\usepackage{amssymb}  
\usepackage{hyperref}
\usepackage{float}
\usepackage{multirow}
\usepackage{paralist, tabularx}
\usepackage[dvipsnames]{xcolor}
\usepackage{soul}
\usepackage[normalem]{ulem}

\newtheorem{definition}{Definition}
\newtheorem{assm}{Assumption}
\newtheorem{lemma}{Lemma}
\title{\LARGE \bf
FORMULA: FORmation MPC with neUral barrier Learning for safety Assurance

}

\author{%
Qintong Xie$^{1,*}$, Weishu Zhan$^{2,*}$ and Peter Chin$^{1}$%
\thanks{$^*$ These authors contributed equally to this work.}%
\thanks{Qintong Xie and Peter Chin are with Thayer School of Engineering, Dartmouth College, NH 03784.}%
\thanks{Weishu Zhan is with the Department of Computer Science, The University of Manchester, Manchester M13 9PL, United Kingdom.}%
}

\begin{document}

\maketitle

\thispagestyle{empty}
\pagestyle{empty}

\begin{abstract}
Multi-robot systems (MRS) are essential for large-scale applications such as disaster response, material transport, and warehouse logistics, yet ensuring robust, safety-aware formation control in cluttered and dynamic environments remains a major challenge. Existing model predictive control (MPC) approaches suffer from limitations in scalability and provable safety, while control barrier functions (CBFs), though principled for safety enforcement, are difficult to handcraft for large-scale nonlinear systems. This paper presents \textbf{FORMULA}, a safe distributed, learning-enhanced predictive control framework that integrates MPC with Control Lyapunov Functions (CLFs) for stability and neural network-based CBFs for decentralized safety, eliminating manual safety constraint design. This scheme maintains formation integrity during obstacle avoidance, resolves deadlocks in dense configurations, and reduces online computational load. Simulation results demonstrate that FORMULA enables scalable, safety-aware, formation-preserving navigation for multi-robot teams in complex environments.


\end{abstract}

\input{section/Introduction}

\input{section/Problem_Formulation}
\input{section/CONTROLLER_DESIGN}

\input{section/Experiments}

\input{section/Conclusion_and_Future_Work}

%
%
%
%
\bibliography{main}
\bibliographystyle{IEEEtran}
%
%

\end{document}

%% file: section/introduction.tex
\section{Introduction}

Multi-robot systems (MRS) are essential for handling complex, large-scale tasks such as disaster response \cite{tian2020search, ghamry2017multiple}, material transport \cite{pang2012agent}, and warehouse management \cite{li2023double, kattepur2018distributed}.
However, keeping stable formations is challenging due to terrain variability and dynamic obstacles.
Despite extensive research \cite{grover2023before, wang2017safety, zhang2025gcbf+}, developing a robust, scalable controller for safe formation maintenance in uncertain conditions still requires further study.

Traditional approaches, such as leader-follower formations \cite{xiao2016formation} and virtual structures \cite{zhou2018agile}, perform well in structured environments but struggle in highly dynamic scenarios. Ren et al. \cite{ren2007information} proposed a consensus-based approach that effectively maintains formation but lacks independent obstacle avoidance capabilities. Zhou et al. \cite{zhou2018agile} utilized artificial potential fields (APFs), modeling obstacles as high-potential regions that generate repulsive forces to steer agents away and ensure collision-free trajectories. However, their reliance on local interactions can lead to deadlocks and limited trajectory optimization. To improve adaptability, hybrid methods have been proposed, such as Qi et al. \cite{qi2022formation}, who integrated consensus theory with pigeon-inspired obstacle avoidance, and approaches combining deep reinforcement learning with stochastic braking \cite{zhao2021usv}. RL approaches require extensive training data and struggle to enforce safety, limiting real-world applicability.

MPC is a powerful framework for multi-robot coordination, balancing formation control and obstacle avoidance via constrained optimization. Stability is often enforced through Control Lyapunov Functions (CLFs), yielding MPC-CLF schemes that enhance robustness and performance even with imperfect models \cite{grandia2020nonlinear, minniti2021adaptive}. Distributed MPC (DMPC) has been applied to platooning under unidirectional communication \cite{zheng2016distributed} and nonconvex trajectory optimization via alternating direction methods \cite{ferranti2022distributed}, while Lyapunov-based DMPC further improves disturbance rejection \cite{wei2019distributed}. However, computational complexity remains a key limitation for large-scale systems, as nonlinear solvers incur high costs for long horizons on resource-constrained platforms. Explicit MPC mitigates this by precomputing control laws offline \cite{alonso2020explicit}, improving real-time performance at the expense of exponential offline complexity and strict model fidelity. More recently, learning-based policy refinement has emerged as a way to adapt control policies online, alleviating computational burdens while approximately preserving closed-loop optimality.

Control Barrier Functions (CBFs) provide a complementary, model-based route to safety. Centralized CBF frameworks have been developed for simple dynamics \cite{ames2019control}, and extensions to polynomial systems using Sum-of-Squares optimization broaden theoretical guarantees \cite{xu2017correctness}, albeit with limited scalability. Distributed CBFs leverage local observations and neighbor communication \cite{wang2017safety, zhang2023neural}, but nonlinear dynamics and input constraints remain challenging. Learning-based CBFs improve adaptability \cite{qin2021learning}, yet can overly conflate cooperative agents with obstacles, leading to conservative behaviors \cite{dawson2023safe}. Hybrid methods combine NMPC with CBFs for dynamic feasibility \cite{thirugnanam2022safety}, and predictive CBF filters improve safety in learning-based control \cite{wabersich2022predictive}. However, CBF-based systems are prone to deadlocks; disturbance-based and rotational strategies have been explored to mitigate stagnation \cite{wang2017safety, grover2023before}. Overall, guaranteeing strict safety under input constraints while preserving real-time scalability in large-scale MRS remains an open challenge \cite{chen2020guaranteed, agrawal2021safe}. 

Our framework integrates MPC-CLF for stability and a neural network-based CBF for decentralized safety, ensuring real-time feasibility with formal safety guarantees. Traditional methods lack explicit obstacle avoidance, encounter deadlocks, or impose high computational costs, while our approach enforces safety constraints and ensures motion feasibility in dense formations. By explicitly guaranteeing low formation error, our approach provides a scalable, efficient, and robust solution for formation control in complex environments. Our contributions are as follows:

\begin{itemize}
\item We introduce FORMULA, a distributed predictive control framework for multi-robot formation control and collision avoidance, equipped with an explicit deadlock resolution mechanism.

\item We provide guarantees on formation integrity during obstacle avoidance, ensuring safety-aware navigation and preservation of the desired formation, independent of system scale or agent count.

\item We replace hand-tuned safety constraints with neural network-based control barrier functions, improving scalability and enabling rapid adaptation to new environments via retraining, while preserving formal safety guarantees.
\end{itemize}

%% file: section/Problem_Formulation.tex
\section{Preliminaries}
\label{sec:2}
This section introduces the hierarchical control architecture, the mobile robot model, and key safety notions based on Control Barrier Functions (CBFs). We then formalize the formation control problem and state the main assumptions used throughout the paper.

\subsection{Mobile Robot Model}\label{sec:robot_model}

We consider $N$ mobile robots in the plane. The configuration of robot
$i \in \{1,\dots,N\}$ is described by its position
$p_i = [p_{x,i}, p_{y,i}]^\top \in \mathbb{R}^2$, heading angle
$\theta_i \in \mathbb{R}$, and forward speed $v_i \in \mathbb{R}$. The
state and control input are
\begin{equation}
    \boldsymbol{x}_i
    =
    \begin{bmatrix}
        p_{x,i} \\ p_{y,i} \\ \theta_i \\ v_i
    \end{bmatrix}
    \in \mathbb{R}^4,
    \qquad
    \boldsymbol{u}_i
    =
    \begin{bmatrix}
        a_i \\ \omega_i
    \end{bmatrix}
    \in \mathbb{R}^2,
\end{equation}
where $a_i$ and $\omega_i$ denote the longitudinal acceleration and
angular velocity, respectively.

Each robot follows unicycle-type kinematics with speed dynamics:
\begin{equation}
\label{eq:unicycle}
\begin{aligned}
    \dot{p}_{x,i} &= v_i \cos\theta_i,\\
    \dot{p}_{y,i} &= v_i \sin\theta_i,\\
    \dot{\theta}_i &= \omega_i,\\
    \dot{v}_i      &= a_i,
\end{aligned}
\end{equation}
subject to input and speed bounds
$a_i \in [a_{\min}, a_{\max}]$,
$\omega_i \in [\omega_{\min}, \omega_{\max}]$,
$v_i \in [v_{\min}, v_{\max}]$.

The dynamics \eqref{eq:unicycle} can be written in control-affine form
\begin{equation}
\label{eq:control_affine}
    \dot{\boldsymbol{x}}_i
    = f(\boldsymbol{x}_i) + g(\boldsymbol{x}_i)\,\boldsymbol{u}_i,
\end{equation}
with
\begin{equation}
\label{eq:f_g}
    f(\boldsymbol{x}_i) =
    \begin{bmatrix}
        v_i \cos\theta_i \\
        v_i \sin\theta_i \\
        0 \\
        0
    \end{bmatrix},
    \qquad
    g(\boldsymbol{x}_i) =
    \begin{bmatrix}
        0 & 0 \\
        0 & 0 \\
        0 & 1 \\
        1 & 0
    \end{bmatrix}.
\end{equation}

\begin{definition}
The vector fields $f, g$ are locally Lipschitz on $\mathbb{X}_i \subseteq \mathbb{R}^4$ if, for any $\boldsymbol{x}_i \in \mathbb{X}_i$, there exist $M>0$ and $\delta>0$ such that for all $\boldsymbol{x}_i,\boldsymbol{x}_i' \in \mathbb{X}_i$ with $\|\boldsymbol{x}_i-\boldsymbol{x}_i'\|\le\delta$,
\[
\|f(\boldsymbol{x}_i)-f(\boldsymbol{x}_i')\|\le M\|\boldsymbol{x}_i-\boldsymbol{x}_i'\|,\quad
\|g(\boldsymbol{x}_i)-g(\boldsymbol{x}_i')\|\le M\|\boldsymbol{x}_i-\boldsymbol{x}_i'\|.
\]
\end{definition}

\subsection{Problem Statement and Safety Notions}

We consider a multi-robot formation control problem in an unknown environment, as illustrated in Fig.~\ref{fig2}. The goal is to design distributed control inputs $\boldsymbol{u}_i$ that drive all robots from their initial states to designated goal states while: (i) maintaining a desired formation; (ii) avoiding collisions with obstacles and neighboring robots; and (iii) respecting state and input constraints.

\begin{figure}[t]
    \centering
    \includegraphics[width=1\columnwidth]{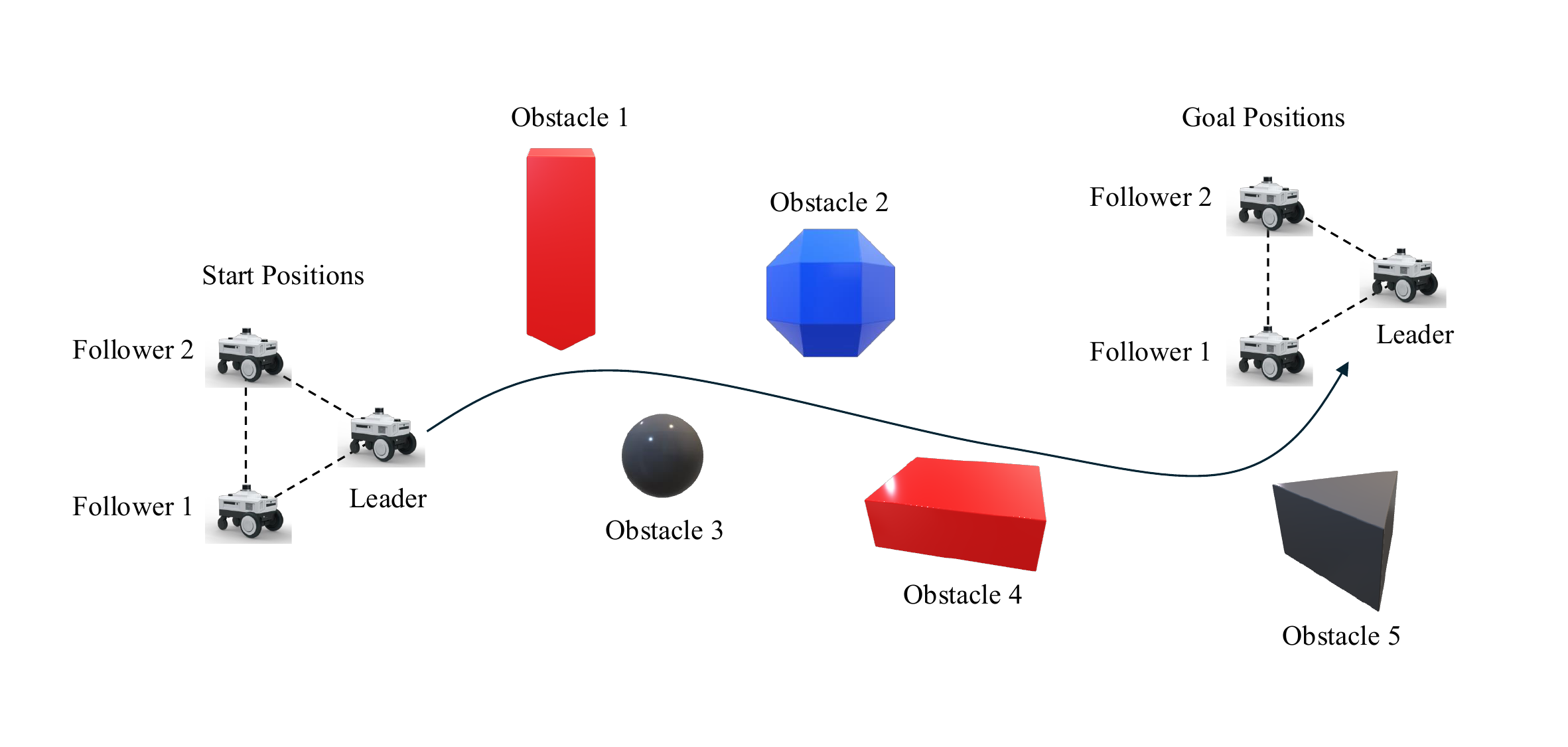}
    \vspace{-20pt}
    \caption{Cooperative formation and safety-aware navigation for multiple mobile robots.}
    \vspace{-0.7cm}
    \label{fig2}
\end{figure}

For each agent $i$, let $\mathbb{X}_{\text{safe},i} \subseteq \mathbb{X}_i$ and
$\mathbb{U}_{\text{safe},i} \subseteq \mathbb{U}_i$ denote the safe state and input sets. Safe control means that for all $t \ge 0$,
\[
\boldsymbol{x}_i(t) \in \mathbb{X}_{\text{safe},i}, \quad
\boldsymbol{u}_i(t) \in \mathbb{U}_{\text{safe},i},
\]
and that a minimum safety distance $s>0$ is maintained between robot $i$ and any obstacle or neighbor, i.e.,
$d(\boldsymbol{x}_i,\boldsymbol{o}_i)\ge s$ for all $i$, where $\boldsymbol{o}_i$ collects local obstacle/neighbor information perceived by robot $i$.

\begin{definition}
A continuous function $\alpha:[0,a) \rightarrow [0,\infty)$ is class-$\kappa$ if it is strictly increasing and satisfies $\alpha(0)=0$. An extended class-$\kappa$ function $\beta:\mathbb{R}\rightarrow\mathbb{R}$ is strictly increasing and satisfies $\beta(0)=0$.
\end{definition}

\begin{definition}
The Lie derivative of a scalar function $\eta(\boldsymbol{x}_i)$ along a vector field $\xi(\boldsymbol{x}_i)$ is
\[
L_{\xi}\eta(\boldsymbol{x}_i)
= \frac{\partial \eta(\boldsymbol{x}_i)}{\partial \boldsymbol{x}_i}\,\xi(\boldsymbol{x}_i).
\]
\end{definition}

\begin{definition}[CBF \cite{ames2016control}]
A continuously differentiable function $h:\mathbb{X}_i \rightarrow \mathbb{R}$ is a Control Barrier Function (CBF) if it induces a safe set
$
S := \{\boldsymbol{x}_i \mid h(\boldsymbol{x}_i)\ge 0\}
$
and there exists an extended class-$\kappa$ function $\alpha(\cdot)$ such that, for the dynamics \eqref{eq:control_affine},
\[
\sup_{\boldsymbol{u}_i\in\mathbb{U}_i}
\big[L_f h(\boldsymbol{x}_i) + L_g h(\boldsymbol{x}_i)\boldsymbol{u}_i\big]
\;\ge\; -\alpha\big(h(\boldsymbol{x}_i)\big),
\quad \forall \boldsymbol{x}_i\in S.
\]
If this condition holds, $S$ is forward invariant.
\end{definition}

We use decentralized CBFs to encode local safety for each robot.

\begin{definition}[Decentralized safety]\label{definition:cbfloss}
A decentralized CBF for agent $i$ is a function
$h_i:\mathbb{X}_i \times \mathbb{O}_i \rightarrow \mathbb{R}$, where $\boldsymbol{o}_i$ denotes local obstacle and neighbor information. It defines the safe set
\[
S_i = \{\boldsymbol{x}_i \mid h_i(\boldsymbol{x}_i,\boldsymbol{o}_i)\ge 0\},
\]
with boundary $\partial S_i = \{\boldsymbol{x}_i \mid h_i(\boldsymbol{x}_i,\boldsymbol{o}_i)=0\}$ and interior
$\operatorname{Int}(S_i) = \{\boldsymbol{x}_i \mid h_i(\boldsymbol{x}_i,\boldsymbol{o}_i)>0\}$.
If there exists an extended class-$\kappa$ function $\alpha$ such that
\[
\sup_{\boldsymbol{u}_i\in\mathbb{U}_i}
\big[L_f h_i(\boldsymbol{x}_i,\boldsymbol{o}_i) + L_g h_i(\boldsymbol{x}_i,\boldsymbol{o}_i)\boldsymbol{u}_i\big]
\;\ge\; -\alpha\big(h_i(\boldsymbol{x}_i,\boldsymbol{o}_i)\big)
\]
for all $\boldsymbol{x}_i\in S_i$, then $S_i$ is forward invariant and agent $i$ remains safe. If this holds for all $i$, the entire multi-agent system is safe.
\end{definition}

\textcolor{black}{Since our focus is on designing a fast policy learning algorithm for distributed MPC (DMPC), we make two standard assumptions to ensure well-posedness of formation control and safety.}

\begin{assm}
The communication network is time-invariant and delay-free during formation maintenance and obstacle avoidance. The interaction topology does not change, and neighboring robots can exchange state information instantaneously, so each robot has access to the states of its neighbors.
\end{assm}

\begin{assm}
For each robot $i$, there exists a state-feedback law $\boldsymbol{u}_i(\boldsymbol{x}_i)$ that renders the formation-tracking error stable while keeping the state within the collision-free set for all time; i.e., the error converges and the safe set is forward invariant.

\end{assm}

%% file: section/CONTROLLER_DESIGN.tex
\section{Safe Formation Controller Design}
\label{sec:3}

This section presents the safe formation controller in \textsc{FORMULA}, a distributed framework for multi-robot formation maintenance and safety-aware navigation. \textsc{FORMULA} couples an MPC-CLF layer for formation tracking with neural network–parameterized decentralized CBFs for safety, improving adaptability across team sizes and environment complexity. An event-triggered mechanism resolves deadlocks when the nominal MPC solution conflicts with CBF constraints, ensuring continuous, stable motion. The following subsections detail the MPC-CLF formulation, the learning-based CBF design, and the deadlock-resolution strategy, summarized in Fig.~\ref{fig:framework}.

\subsection{MPC With CLF Constraints}\label{sec:mpc-clf}

We design a distributed MPC scheme with CLF constraints to extend single-robot MPC-CLF techniques to the multi-robot formation setting. The objective is to stabilize the formation tracking error while respecting the dynamics~\eqref{eq:control_affine} and input bounds for each robot $i \in \{1,\dots,N\}$.

To reduce communication and computation, \textsc{FORMULA} adopts a fully distributed structure: each robot exchanges state information only with its neighbors $\mathcal{N}_i$. We define the nominal state for robot $i$ as a convex combination of neighbor and leader states:
\begin{equation}
\hat{\boldsymbol{x}}_i
=
\frac{\displaystyle\sum_{j \neq i} c_{ij}\big(\boldsymbol{x}_j + \boldsymbol{\Delta}_{ij}\big)
+ s_i\big(\boldsymbol{x}_r + \boldsymbol{\Delta}_{ir}\big)}
{\displaystyle\sum_{j \neq i} c_{ij} + s_i},
\label{Eq.nominalstate}
\end{equation}
where $\hat{\boldsymbol{x}}_i = [p_{x,i}, p_{y,i}, \theta_i, v_i]^\top \in \mathbb{R}^4$ is the nominal state of robot $i$; $\boldsymbol{x}_j$ and $\boldsymbol{x}_r$ are the states of neighbor $j$ and the leader, respectively; $c_{ij} \in \{0,1\}$ encodes whether $j \in \mathcal{N}_i$ ($c_{ij}=1$ if $j \in \mathcal{N}_i$, and $c_{ij}=0$ otherwise); and $s_i \in \{0,1\}$ indicates whether robot $i$ receives the leader state. The vectors $\boldsymbol{\Delta}_{ij}$ and $\boldsymbol{\Delta}_{ir}$ specify the desired formation offsets relative to neighbors and the leader.

The local formation-tracking error is defined as
\[
\boldsymbol{e}_{\mathcal{N}_i} := \boldsymbol{x}_i - \hat{\boldsymbol{x}}_i, \label{error}
\]
and we use the quadratic Lyapunov function
$V_i(\boldsymbol{e}_{\mathcal{N}_i}) = \boldsymbol{e}_{\mathcal{N}_i}^\top \boldsymbol{e}_{\mathcal{N}_i}$.
The MPC-CLF problem for robot $i$ is
\begin{equation}
\begin{aligned}
\min_{\hat{\boldsymbol{u}}_i(\cdot)} \quad &  
J_{\mathrm{CLF},i}\big(\hat{\boldsymbol{u}}_i\big)
  = \int_t^{t+T} \|\hat{\boldsymbol{u}}_i(\tau)\|^2 \,\mathrm{d}\tau \\
\text{s.t.} \quad &
\dot{\boldsymbol{x}}_i = f(\boldsymbol{x}_i) + g(\boldsymbol{x}_i)\,\hat{\boldsymbol{u}}_i, \\
& \dot{V}_i\big(\boldsymbol{e}_{\mathcal{N}_i}, \hat{\boldsymbol{u}}_i\big)
  + \beta\,V_i\big(\boldsymbol{e}_{\mathcal{N}_i}\big) \le 0, \\
& v_{\min} \le v_i \le v_{\max}, \\
& \boldsymbol{u}_{\min} \le \hat{\boldsymbol{u}}_i \le \boldsymbol{u}_{\max},
\end{aligned}
\label{Eq. 3}
\end{equation}
where $\beta>0$ is a design constant, $T$ is the prediction horizon, and 
$\boldsymbol{u}_{\min}, \boldsymbol{u}_{\max}$ denote the input bounds consistent with the limits in Section~\ref{sec:robot_model}. The first element of the optimal sequence $\hat{\boldsymbol{u}}_i^\ast$ is applied as the actual input $\boldsymbol{u}_i$.

We recall the CLF notion used in \textsc{FORMULA}.

\begin{definition}[\cite{minniti2021adaptive}]\label{definition:clf}
A continuously differentiable function $V:\mathbb{R}^4 \rightarrow \mathbb{R}$ is a CLF for robot $i$ if there exist extended class-$\kappa$ functions $\beta_1,\beta_2,\beta_3$ such that for all $\boldsymbol{e}_{\mathcal{N}_i} \in \mathbb{R}^4$:
\begin{itemize}
    \item \emph{Positive definiteness:}
    \vspace{-4pt}
    \[
    \beta_1\big(\|\boldsymbol{e}_{\mathcal{N}_i}\|\big)
    \le V(\boldsymbol{e}_{\mathcal{N}_i})
    \le \beta_2\big(\|\boldsymbol{e}_{\mathcal{N}_i}\|\big),
    \]
    where $V(\boldsymbol{e}_{\mathcal{N}_i})=0$ iff $\boldsymbol{e}_{\mathcal{N}_i}=0$, and $V(\boldsymbol{e}_{\mathcal{N}_i})>0$ otherwise.
    \item \emph{Exponential decrease:}
    \vspace{-4pt}
    \[
    \inf_{\boldsymbol{u}_i \in \mathbb{U}_i}
    \big[
        L_f V(\boldsymbol{e}_{\mathcal{N}_i})
        + L_g V(\boldsymbol{e}_{\mathcal{N}_i})\,\boldsymbol{u}_i
    \big]
    \le -\beta_3\big(V(\boldsymbol{e}_{\mathcal{N}_i})\big),
    \]
    ensuring that $\boldsymbol{e}_{\mathcal{N}_i}$ converges exponentially to zero under a suitable feedback policy.
\end{itemize}
\end{definition}

Differentiating the nominal state~\eqref{Eq.nominalstate} and using the control-affine dynamics~\eqref{eq:control_affine} yields
\begin{equation}\label{eq:nominaldynamics}
\dot{\hat{\boldsymbol{x}}}_i
=
\frac{\displaystyle\sum_{j \neq i} c_{ij}\big(f(\boldsymbol{x}_j)+g(\boldsymbol{x}_j)\boldsymbol{u}_j\big)
+ s_i\big(f(\boldsymbol{x}_r)+g(\boldsymbol{x}_r)\boldsymbol{u}_r\big)}
{\displaystyle\sum_{j \neq i} c_{ij} + s_i}.
\end{equation}
The time derivative of $V_i(\boldsymbol{e}_{\mathcal{N}_i})$ follows as
\begin{equation}
\begin{aligned}
\dot{V}_i\big(\boldsymbol{e}_{\mathcal{N}_i}, \boldsymbol{u}_i\big)
&= 2\,\boldsymbol{e}_{\mathcal{N}_i}^\top\big(\dot{\boldsymbol{x}}_i - \dot{\hat{\boldsymbol{x}}}_i\big)\\
&= \frac{\partial V_i(\boldsymbol{x}_i-\hat{\boldsymbol{x}}_i)}{\partial \boldsymbol{x}_i}\dot{\boldsymbol{x}}_i
 + \frac{\partial V_i(\boldsymbol{x}_i-\hat{\boldsymbol{x}}_i)}{\partial \hat{\boldsymbol{x}}_i}\dot{\hat{\boldsymbol{x}}}_i \\
&= 2(\boldsymbol{x}_i-\hat{\boldsymbol{x}}_i)^\top
\big(f(\boldsymbol{x}_i)+g(\boldsymbol{x}_i)\boldsymbol{u}_i-\dot{\hat{\boldsymbol{x}}}_i\big),
\end{aligned}
\label{Eq. 4}
\end{equation}
which connects the CLF decrease condition to the control input and the nominal formation dynamics defined by neighbors and the leader.

\begin{lemma}
    If the optimal MPC-CLF cost $J_{\mathrm{CLF},i}(\hat{\boldsymbol{u}}_i^\ast)$ converges to zero for robot $i$, then the CLF constraint becomes inactive, and the closed-loop dynamics~\eqref{eq:control_affine}--\eqref{eq:nominaldynamics} drive the formation error $\boldsymbol{e}_{\mathcal{N}_i}$ to zero, guaranteeing formation stability without additional control effort.
\end{lemma}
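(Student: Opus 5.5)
The argument has two parts: show that vanishing cost forces the optimal input to vanish, then use feasibility of the CLF constraint at zero input to get exponential decay of $V_i$.

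\emph{Step 1: vanishing cost forces vanishing input.} Since $J_{\mathrm{CLF},i}(\hat{\boldsymbol{u}}_i^\ast)=\int_t^{t+T}\|\hat{\boldsymbol{u}}_i^\ast(\tau)\|^2\,\mathrm{d}\tau$ is a nonnegative integral, $J_{\mathrm{CLF},i}\to 0$ gives $\hat{\boldsymbol{u}}_i^\ast\to 0$ in $L^2([t,t+T])$. With the usual MPC regularity (piecewise continuous inputs, right-continuity at the sampling instant), this yields $\boldsymbol{u}_i(t)=\hat{\boldsymbol{u}}_i^\ast(t)\to 0$. We are therefore in the regime where the applied input is asymptotically zero.

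\emph{Step 2: the CLF constraint is slack at zero input.} Since the minimizer is (asymptotically) $\boldsymbol{u}_i=0$, zero input must be feasible for \eqref{Eq. 3}. Substituting $\boldsymbol{u}_i=0$ into \eqref{Eq. 4} gives
\[
\dot V_i = 2\boldsymbol{e}_{\mathcal{N}_i}^\top\big(f(\boldsymbol{x}_i)-\dot{\hat{\boldsymbol{x}}}_i\big)\le -\beta V_i .
\]
The drift of robot $i$ together with the neighbor/leader motion \eqref{eq:nominaldynamics} thus already supplies the required Lyapunov decrease. The constraint binds only when it forces a nonzero input, and minimizing $\|\hat{\boldsymbol{u}}_i\|^2$ would then give strictly positive cost. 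Hence zero cost means the constraint does not alter the unconstrained minimizer, i.e., it is inactive. I would state this as: the unconstrained minimizer $\hat{\boldsymbol{u}}_i\equiv0$ is feasible, so the KKT multiplier of the CLF constraint is zero.

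\emph{Step 3: exponential decay and convergence of the error.} Integrating $\dot V_i\le-\beta V_i$ by the comparison lemma gives $V_i(t)\le V_i(t_0)e^{-\beta(t-t_0)}$. Because $V_i=\|\boldsymbol{e}_{\mathcal{N}_i}\|^2$ satisfies the positive-definiteness condition of Definition~\ref{definition:clf} with $\beta_1(r)=\beta_2(r)=r^2$, it follows that $\|\boldsymbol{e}_{\mathcal{N}_i}(t)\|\le\|\boldsymbol{e}_{\mathcal{N}_i}(t_0)\|e^{-\beta(t-t_0)/2}\to0$. This convergence is obtained with $\boldsymbol{u}_i\to0$, which is the claim of no additional control effort.

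\emph{Main obstacle.} The delicate point is passing from "cost tends to zero" to "the zero-input trajectory satisfies the CLF inequality". Near-zero cost only makes the optimal input small, not identically zero. I would handle this with continuity: $L_gV_i = 2\boldsymbol{e}_{\mathcal{N}_i}^\top g(\boldsymbol{x}_i)$ is bounded on compact sets because $g$ is constant. Feasibility of $\hat{\boldsymbol{u}}_i^\ast$ then gives
\[
\dot V_i|_{\boldsymbol{u}_i=0}\le -\beta V_i + \|L_gV_i\|\,\|\hat{\boldsymbol{u}}_i^\ast\| ,
\]
where the perturbation term vanishes asymptotically. A standard vanishing-perturbation (input-to-state) argument then still yields $\boldsymbol{e}_{\mathcal{N}_i}\to0$. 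Persistent feasibility of \eqref{Eq. 3}, which follows from Assumption 2, is used implicitly throughout.
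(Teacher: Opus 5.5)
Your proposal follows essentially the same route as the paper's proof. The paper argues that $J_{\mathrm{CLF},i}\to 0$ forces $\hat{\boldsymbol{u}}_i^\ast\to\boldsymbol{0}$, that the KKT conditions then make the CLF constraint hold with equality only at $\boldsymbol{x}_i=\hat{\boldsymbol{x}}_i$, and hence $V_i\to 0$; you make the last step explicit via the comparison lemma, which the paper leaves implicit.

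Two small refinements. In Step 1, $L^2$ convergence plus right-continuity does not by itself give $\hat{\boldsymbol{u}}_i^\ast(t)\to\boldsymbol{0}$; use the piecewise-constant input parametrization, for which $J_{\mathrm{CLF},i}\ge \Delta t\,\|\hat{\boldsymbol{u}}_i^\ast(t)\|^2$. Also, since the applied input itself satisfies the CLF constraint, $\dot V_i\le-\beta V_i$ holds directly along the closed loop, so the vanishing-perturbation argument of your last paragraph is not needed for convergence.
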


\begin{proof}
    The result follows from standard MPC-CLF optimality arguments: when $J_{\mathrm{CLF},i}(\hat{\boldsymbol{u}}_i^\ast)\to 0$, the optimal input sequence tends to $\hat{\boldsymbol{u}}_i^\ast \to \boldsymbol{0}$, and the KKT conditions for~\eqref{Eq. 3} imply that the CLF constraint is satisfied with equality only when $\boldsymbol{x}_i = \hat{\boldsymbol{x}}_i$. Thus $V_i(\boldsymbol{e}_{\mathcal{N}_i}) \to 0$, and the formation error converges to zero. \qedhere
\end{proof}

\begin{figure*}[t]
	\centering
	\includegraphics[width=\textwidth]{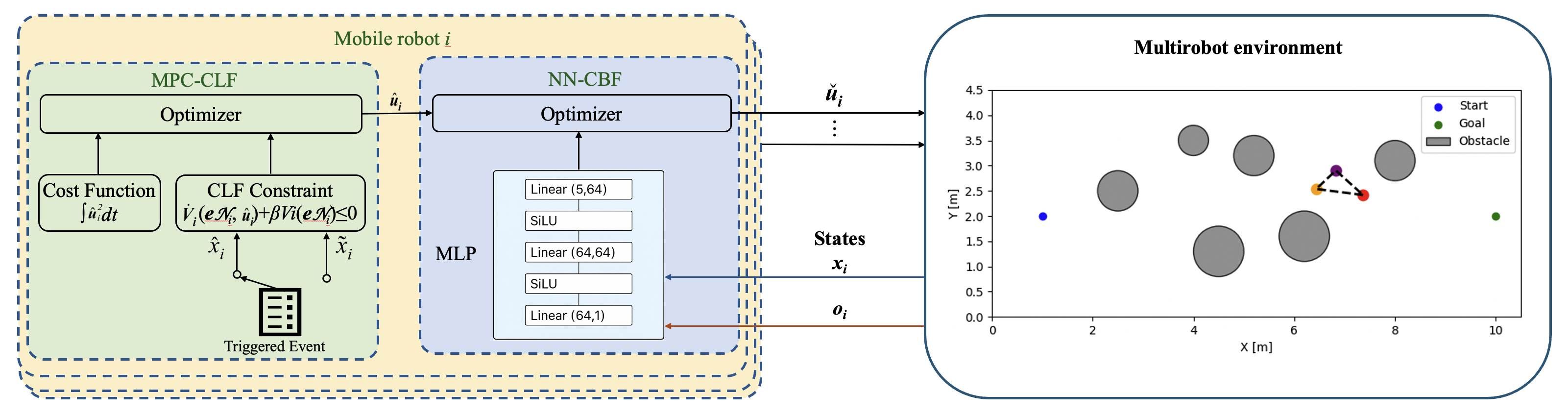}
    \vspace{-0.6cm}
	\caption{Overview of the proposed \textsc{FORMULA} framework. Left: each robot runs a distributed MPC-CLF optimizer for formation stability and a NN--CBF module for safety, with an event-triggered deadlock-resolution mechanism. Right: one of the multi-robot simulation environments.}
    \label{fig:framework}
    \vspace{-0.5cm}
\end{figure*}

\subsection{Neural Network-Based Control Barrier Function}\label{sec:nn-cbf}
Beyond formation tracking, \textsc{FORMULA} must guarantee safety by avoiding collisions with obstacles and neighboring robots. To this end, we introduce a Neural Network-based CBF (NN-CBF) for each agent that adapts the decentralized CBF $h_i$ in Definition~\ref{definition:cbfloss} to varying robot densities and environment clutter while preserving the safety condition.

Let $h_i^{\theta_i}:\mathbb{X}_i \times \mathbb{O}_i \rightarrow \mathbb{R}$ denote the parametric CBF for robot $i$, with neural-network parameters $\theta_i$. Based on Definition~\ref{definition:cbfloss}, we define the empirical CBF loss as
\begin{equation}
\mathbf{L}_{\mathrm{CBF}} \;=\; \sum_{i=1}^N L^i_{\mathrm{CBF}},
\label{Eq.tatalcbf}
\end{equation}
where the per-agent loss $L^i_{\mathrm{CBF}}$ is
\begin{equation}
\begin{aligned}
L^i_{\mathrm{CBF}}(\theta_i)
&=\sum_{\boldsymbol{x}_i \in \mathbb{X}_{\text{safe},i}}
    \max\!\big(0,\, \gamma - h_i^{\theta_i}(\boldsymbol{x}_i,\boldsymbol{o}_i)\big) \\
&\quad + \sum_{\boldsymbol{x}_i \in \mathbb{X}_{\text{unsafe},i}}
    \max\!\big(0,\, \gamma + h_i^{\theta_i}(\boldsymbol{x}_i,\boldsymbol{o}_i)\big) \\
&\quad + \sum_{\boldsymbol{x}_i \in \mathbb{X}_h}
    \max\Big(0,\, \gamma
      - L_f h_i^{\theta_i}(\boldsymbol{x}_i,\boldsymbol{o}_i)
      - L_g h_i^{\theta_i}(\boldsymbol{x}_i,\boldsymbol{o}_i)\,\boldsymbol{u}_i \\
&\qquad\qquad\qquad\qquad\qquad
      - \alpha\big(h_i^{\theta_i}(\boldsymbol{x}_i,\boldsymbol{o}_i)\big)\Big).
\end{aligned}
\label{Eq.lcbf}
\end{equation}
Here $\mathbb{X}_{\text{safe},i}$ and $\mathbb{X}_{\text{unsafe},i}$ are sampled safe and unsafe state sets for agent $i$, respectively, $\mathbb{X}_h$ is the set of states where the CBF inequality is evaluated, $\boldsymbol{o}_i \in \mathbb{O}_i$ collects local obstacle/neighbor information, and $\alpha(\cdot)$ is an extended class-$\kappa$ function as in Definition~\ref{definition:cbfloss}. The constant $\gamma>0$ is a safety margin (we use $\gamma = 10^{-3}$ in our implementation). The first two terms in~\eqref{Eq.lcbf} enforce correct classification of safe versus unsafe states, while the third term penalizes violations of the CBF forward-invariance condition, thus encouraging $S_i$ to be forward invariant under the dynamics~\eqref{eq:control_affine}.

The MPC-CLF controller in \textsc{FORMULA} yields a nominal input $\hat{\boldsymbol{u}}_i$ from~\eqref{Eq. 3}. If we applied only the NN-CBF constraint without regularization, the safest behavior in cluttered regions would often be to stop, causing followers to freeze in front of obstacles. To avoid such overly conservative behavior and keep the refined control close to the MPC-CLF solution, we introduce a regularization loss
\[
L^i_u \;=\; \big\|\check{\boldsymbol{u}}_i - \hat{\boldsymbol{u}}_i\big\|^2,
\]
where $\check{\boldsymbol{u}}_i$ is the refined input that satisfies the NN-CBF constraint. The total training loss for the CBF network is
\[
L \;=\; \sum_{i=1}^N \Big( L^i_{\mathrm{CBF}} + \sigma\,L^i_u \Big),
\]
with $\sigma>0$ weighting safety against control performance. In operation, \textsc{FORMULA} uses $\check{\boldsymbol{u}}_i$ as the applied input, thereby maintaining formation through the MPC-CLF layer while enforcing obstacle and inter-agent safety via NN-CBF.

\subsection{Deadlock Resolution} \label{sec:deadlock}
In dense or tightly constrained environments, the safety constraints enforced by NN-CBFs may conflict with the MPC-CLF objective, yielding near-zero or infeasible inputs and causing the team to stall. This effect is particularly severe in narrow passages, where conservative CBF constraints can block forward motion even though the nominal MPC command would be dynamically feasible.

Within \textsc{FORMULA}, we address this issue using an event-triggered deadlock resolver that perturbs the nominal formation reference, rather than directly modifying the low-level control input. By adjusting the nominal state used by MPC-CLF, we relax the conflict with the CBF constraints while preserving Lyapunov-based stability.

For each robot $i$, we define a deadlock indicator
\begin{equation}
E_i \;=\; \operatorname{sign}\!\big(|J_{\mathrm{CLF},i} - L^i_{\mathrm{CBF}}| + |v_i|\big)
      \;-\; \operatorname{sign}\!\big(J_{\mathrm{CLF},i}\big),
\end{equation}
where $J_{\mathrm{CLF},i}$ is the current MPC-CLF cost for agent $i$, $L^i_{\mathrm{CBF}}$ is the instantaneous CBF violation measure from~\eqref{Eq.lcbf}, and $v_i$ is the forward speed in the state $\boldsymbol{x}_i$. Intuitively, a deadlock is declared when safety penalties dominate while the robot’s speed remains small, i.e., when $E_i < 0$.

When $E_i < 0$, we modify the nominal state~\eqref{Eq.nominalstate} via a bounded, state-aligned transformation:
\begin{equation}
\tilde{\boldsymbol{x}}_i
\;=\;
T\!\big(\hat{\boldsymbol{x}}_i - \boldsymbol{x}_i\big) + \boldsymbol{x}_i,
\end{equation}
where $T \in \mathbb{R}^{4 \times 4}$ (since $\boldsymbol{x}_i \in \mathbb{R}^4$) applies a structured perturbation that biases the nominal formation reference away from configurations that cause CBF infeasibility. The updated nominal state $\tilde{\boldsymbol{x}}_i$ is then used in place of $\hat{\boldsymbol{x}}_i$ when recomputing the MPC-CLF input, producing a new nominal command $\hat{\boldsymbol{u}}_i$ and, after NN-CBF refinement, a nonzero $\check{\boldsymbol{u}}_i$ that breaks the stalemate.

The deadlock resolver is triggered only when $E_i<0$ and is disabled once the conflict between MPC-CLF and NN-CBF relaxes (i.e., $E_i \ge 0$ again). In this way, \textsc{FORMULA} maintains the stability guarantees while ensuring continuous, safety-aware motion in cluttered multi-robot environments.

%% file: section/Experiments.tex
\section{Experiments and Results}
\subsection{Simulation Environment}

In the simulation environment, we evaluate teams of planar mobile robots evolving under the dynamics~\eqref{eq:control_affine} in a 2D $10 m \times 4.5 m$ cluttered workspace while preserving a specified formation, as illustrated in Fig.~\ref{fig:framework} (Right). Each robot
was modeled as a point mass for planning purposes. A designated leader and multiple followers maintain relative offsets $\boldsymbol{\Delta}_{ij}, \boldsymbol{\Delta}_{ir}$ in the leader’s body frame, subject to minimum separation constraints between agents and circular obstacles, consistent with the safety notion in Definition~\ref{definition:cbfloss}.

For the smallest team, we deploy one leader and two followers in a triangular formation. The leader tracks a straight nominal path from a start to a goal position, and followers regulate their formation error $\boldsymbol{e}_{\mathcal{N}_i} = \boldsymbol{x}_i - \hat{\boldsymbol{x}}_i$ around the desired offsets. Circular obstacles are positioned such that the leader’s nominal path passes close to them, forcing the team to compress and then re-expand the formation while maintaining safety. To assess scalability, we additionally test formations with $4$ and $8$ followers in more cluttered environments, with higher obstacle density and tighter passages, as shown in Fig.~\ref{fig7}.

\begin{figure}
	\centering
	\includegraphics[width=\columnwidth]{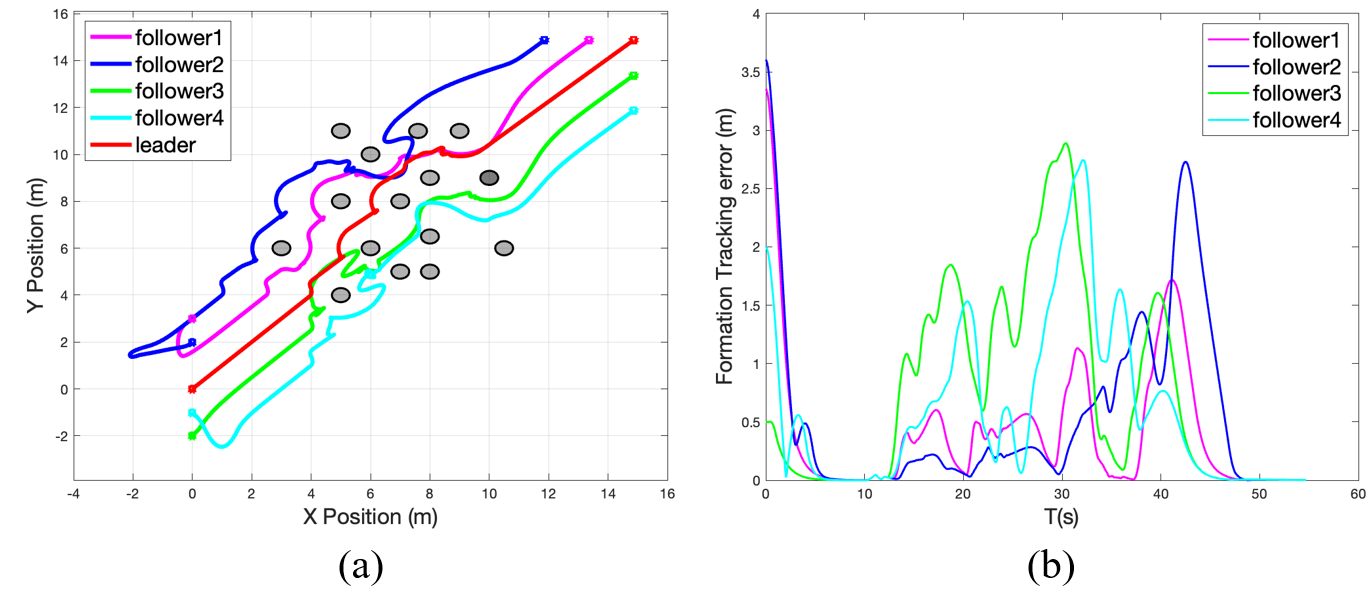}
    \vspace{-0.8cm}
	\caption{Formation control in obstacle-dense environments. (a) Planar trajectories of the leader (red) and followers (magenta, blue, green, cyan) among randomly placed obstacles (gray circles). (b) Formation tracking error $\|\boldsymbol{e}_{\mathcal{N}_i}\|$ over time for all followers, showing a temporary increase during obstacle negotiation followed by convergence as the formation re-stabilizes.}
	\label{fig7}
 \vspace{-18pt}
\end{figure}

\subsection{Baseline Algorithms}

We compare \textsc{FORMULA} against three baselines:

(i) \textbf{Artificial Potential Fields (APF) \cite{zhou2018agile}:}  
Each follower is attracted to its desired formation position and repelled from obstacles and neighboring agents via hand-crafted potential fields. This approach is computationally light but prone to local minima and oscillations in cluttered, multi-agent settings.

(ii) \textbf{MPC-CBF \cite{grandia2020nonlinear}:}  
A model predictive controller with CBF constraints that directly embeds safety into a finite-horizon optimization. The discrete-time dynamics are
\[
\boldsymbol{x}_{k+1}
= \boldsymbol{x}_k + \big(f(\boldsymbol{x}_k) + g(\boldsymbol{x}_k)\,\boldsymbol{u}_k\big)\Delta t,
\]
and, at each step, a quadratic cost on state tracking and control effort is minimized subject to state/input bounds and CBF-based safety constraints.

(iii) \textbf{CLF+CBF-QP (analytic) \cite{desai2022clf}:}  
A CLF-based formation controller augmented with an analytic distance CBF. For each robot, we solve a small QP that finds $\boldsymbol{u}_i$ close to the nominal CLF control while satisfying continuous-time CBF constraints with respect to circular obstacles. Let $p_i \in \mathbb{R}^2$ be the position of robot $i$ and $p_{\text{obs}}$ the obstacle center. The barrier is
\begin{equation}
    h(p_i) = \|p_i - p_{\text{obs}}\|^2 - (R_{\text{rob}} + R_{\text{obs}} + s)^2,
\end{equation}
and the QP enforces the standard CBF inequality
\begin{equation}
    \nabla h(p_i)^\top \dot{p}_i \;\ge\; -\alpha\, h(p_i),
\end{equation}
for a class-$\kappa$ constant $\alpha>0$. This serves as a fully model-based safety baseline.

\subsection{Implementation Details}
\label{sec:implementation_details}

\textsc{FORMULA} retains the MPC--CLF formation layer in Section~\ref{sec:mpc-clf} and replaces the analytic distance CBF with the NN-CBF $h_i^{\theta_i}(\boldsymbol{x}_i,\boldsymbol{o}_i)$ from Section~\ref{sec:nn-cbf}. The NN-CBF receives as input the robot position $\boldsymbol{x}_i \in \mathbb{R}^2$ and obstacle parameters $\boldsymbol{o}_i = (p_x,p_y,r) \in \mathbb{R}^3$, concatenated into a five-dimensional vector, and is implemented as a lightweight multilayer perceptron with two hidden layers of width 64 with SiLU activations, followed by a linear output layer that produces a scalar barrier value. At run time, we obtain the nominal control $\hat{\boldsymbol{u}}_i$ from the MPC--CLF problem~\eqref{Eq. 3}, then linearize the learned CBF around the current state to form constraints $a_k^\top \boldsymbol{u}_i \ge b_k$ with
$a_k = \nabla_{\boldsymbol{x}_i} h_i^{\theta_i}(\boldsymbol{x}_i,\boldsymbol{o}_{i,k})$ and $b_k = -\alpha\,h_i^{\theta_i}(\boldsymbol{x}_i,\boldsymbol{o}_{i,k})$. The final control is obtained by a small QP projection,
\[
\check{\boldsymbol{u}}_i
= \arg\min_{\boldsymbol{u}_i}
\big\|\boldsymbol{u}_i - \hat{\boldsymbol{u}}_i\big\|^2
\quad \text{s.t.}\;
a_k^\top \boldsymbol{u}_i \ge b_k,\;\forall k,
\]
which enforces the NN-CBF constraint~\eqref{Eq.lcbf} while staying close to the MPC--CLF solution.

Training proceeds in two stages (using Adam with learning rate $10^{-3}$ and mini-batches of size $512$). In Stage~1, we pre-train on uniformly sampled $(\boldsymbol{x}_i,\boldsymbol{o}_i)$ over the workspace to imitate the analytic distance barrier, using the loss $L$ in Section~\ref{sec:nn-cbf} augmented with a regression term $\mathbb{E}[(h_i^{\theta_i} - h_{\text{true}})^2]$ toward the analytic CBF value $h_{\text{true}}$. In Stage~2, we fine-tune on trajectories generated by the analytic CLF+CBF-QP controller for teams of $2$, $4$, and $8$ followers, using the same loss to focus learning on states visited under realistic formation maneuvers and near the safe-set boundary.

\subsection{Metrics}

\textbf{Safety rate.}  
We define the safety rate as the fraction of simulation time steps at which all inter-agent and agent–obstacle distances satisfy the safety condition $d(\boldsymbol{x}_i,\boldsymbol{o}_i)\ge s$ for all $i$ (cf. Definition~\ref{definition:cbfloss}). From Table \ref{tab:controller_results}, APF shows low safety across all team sizes, MPC-CBF improves safety but degrades as formations become denser, and analytic CLF+CBF-QP yields consistently conservative behavior. \textsc{FORMULA} consistently outperforms the analytic CBF in safety for teams of varying sizes.

\textbf{Scalability.}  
We test systems with $2$, $4$, and $8$ followers. As $N$ increases, APF quickly becomes unreliable due to local minima and congestion, and MPC-CBF’s safety rate also drops under heavy crowding. In contrast, CBF-based controllers scale more gracefully: analytic CLF+CBF-QP and FORMULA maintain high safety and efficient formations for larger teams compared to non-CBF baselines.

\textbf{Formation error.}  
Formation accuracy is measured as the time-averaged norm of the local formation error over all followers and time. APF and MPC-CBF incur larger errors due to local minima (APF) and aggressive obstacle-avoidance maneuvers (MPC-CBF). Analytic CLF+CBF-QP reduces this error, but remains conservative near obstacles. \textsc{FORMULA} consistently achieves the lowest formation error across team sizes, improving upon CLF+CBF-QP while maintaining safety.

\begin{table}[t]
    \centering
    \scriptsize
    \setlength{\tabcolsep}{3pt}
    \renewcommand{\arraystretch}{1.1}
    \caption{Safety rate, average formation error, and average minimum distance for different controllers and team sizes. Each entry is averaged over one representative rollout.}
    \label{tab:controller_results}
    \begin{tabular}{c l c c c}
        \toprule
        Followers & Method        & Safety rate $\uparrow$ & Avg. form. err. [m] $\downarrow$ & Avg. min. dist. [m] $\uparrow$ \\
        \midrule
        \multirow{4}{*}{2} 
          & APF                & 0.297 & 1.090 & 0.059 \\
          & MPC-CBF            & 0.755 & 1.129 & 0.417 \\
          & CLF+CBF-QP         & 0.717 & 0.787 & 0.409 \\
          & \textbf{Proposed}      & \textbf{0.863} & \textbf{0.705} & \textbf{0.490} \\
        \midrule
        \multirow{4}{*}{4} 
          & APF                & 0.220 & 0.925 & 0.078 \\
          & MPC-CBF            & 0.456 & 1.045 & 0.353 \\
          & CLF+CBF-QP         & 0.637 & 0.823 & 0.429 \\
          & \textbf{Proposed}      & \textbf{0.788} & \textbf{0.715} & \textbf{0.463} \\
        \midrule 
        \multirow{4}{*}{8} 
          & APF                & 0.169 & 0.896 & 0.071 \\
          & MPC-CBF            & 0.361 & 1.041 & 0.334 \\
          & CLF+CBF-QP         & 0.479 & 0.911 & \textbf{0.402} \\
          & \textbf{Proposed}      & \textbf{0.716} & \textbf{0.844} & 0.389 \\
        \bottomrule 
    \end{tabular}
\end{table}

\subsection{Deadlock Resolution in Practice}

We further evaluate the event-triggered deadlock resolver in \textsc{FORMULA} (Section~\ref{sec:deadlock}) using an intersection scenario with four robots whose desired paths cross near the origin. Without deadlock resolution, NN-CBF constraints and formation objectives can conflict, causing robots to stall in front of the intersection. With the deadlock indicator $E_i$ and nominal-state transformation $\tilde{\boldsymbol{x}}_i$ activated when $E_i<0$, robots locally adjust their nominal formation references and generate nonzero refined inputs $\check{\boldsymbol{u}}_i$. As shown in Fig.~\ref{fig8}, all robots negotiate the intersection safely, briefly deforming the formation to satisfy CBF constraints and then reestablishing the desired formation once the conflict region is cleared.

\begin{figure}[t]
	\centering
	\includegraphics[width=0.65\columnwidth]{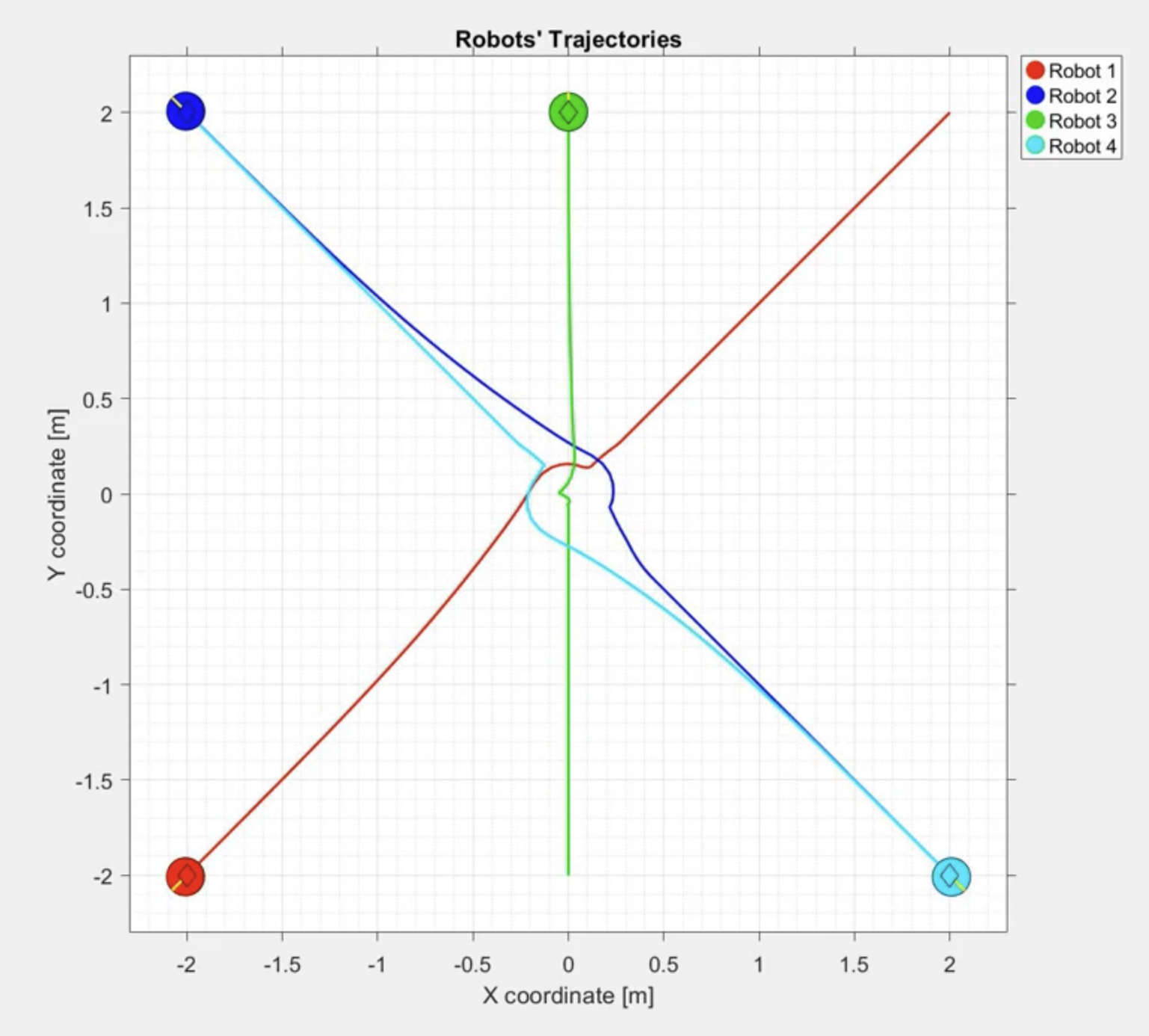}
    \vspace{-10pt}
	\caption{Deadlock resolution in \textsc{FORMULA}. Colored trajectories show four robots safely crossing at an intersection near the origin. The event-triggered mechanism perturbs the nominal states $\hat{\boldsymbol{x}}_i$ when $E_i<0$, breaking potential deadlocks while maintaining safety and formation coherence.}
	\label{fig8}
 \vspace{-18pt}
\end{figure}

%% file: section/Conclusion_and_Future_Work.tex
\section{Conclusion and Future Work}
\label{sec:5}

In this paper, we presented \textsc{FORMULA}, a distributed safe formation-control framework that combines an MPC--CLF layer for formation maintenance with neural network-based decentralized CBFs for reducing collisions. Operating on the control-affine robot dynamics and neighbor-based nominal states, \textsc{FORMULA} preserves formation integrity, adapts its safety margins to varying team sizes and obstacle densities, and resolves safety-induced stalls through an event-triggered deadlock mechanism. Simulations in cluttered 2D environments with robots of up to $8$ followers demonstrate high safety rates and low formation errors, outperforming APF, MPC-CBF, and analytic CLF+CBF-QP baselines. Future work will validate \textsc{FORMULA} on physical robot platforms, extend it to heterogeneous and higher-dimensional multi-robot teams, and enhance the learning pipeline with adaptive optimization and reinforcement learning for online refinement of the NN-CBFs in safety-critical applications such as search-and-rescue and automated warehouses.